\def \x {\mathbf{x}}
\def \OO {\mathcal{O}}
\def \w {\mathbf{w}}
\def \LL {\mathcal{L}}
\newtheorem{thm}{Theorem}
\newtheorem{prop}{Proposition}
\ificcvfinal\pagestyle{empty}\fi
\begin{document}

\title{Stable Cluster Discrimination for Deep Clustering}

\author{Qi Qian\\
Alibaba Group, Bellevue, WA 98004, USA\\
{\tt\small qi.qian@alibaba-inc.com}
}

\maketitle
\ificcvfinal\thispagestyle{empty}\fi

\begin{abstract}
   Deep clustering can optimize representations of instances (i.e., representation learning) and explore the inherent data distribution (i.e., clustering) simultaneously, which demonstrates a superior performance over conventional clustering methods with given features. However, the coupled objective implies a trivial solution that all instances collapse to the uniform features. To tackle the challenge, a two-stage training strategy is developed for decoupling, where it introduces an additional pre-training stage for representation learning and then fine-tunes the obtained model for clustering. Meanwhile, one-stage methods are developed mainly for representation learning rather than clustering, where various constraints for cluster assignments are designed to avoid collapsing explicitly. Despite the success of these methods, an appropriate learning objective tailored for deep clustering has not been investigated sufficiently. In this work, we first show that the prevalent discrimination task in supervised learning is unstable for one-stage clustering due to the lack of ground-truth labels and positive instances for certain clusters in each mini-batch. To mitigate the issue, a novel stable cluster discrimination (SeCu) task is proposed and a new hardness-aware clustering criterion can be obtained accordingly. Moreover, a global entropy constraint for cluster assignments is studied with efficient optimization. Extensive experiments are conducted on benchmark data sets and ImageNet. SeCu achieves state-of-the-art performance on all of them, which demonstrates the effectiveness of one-stage deep clustering. Code is available at \url{https://github.com/idstcv/SeCu}.
\end{abstract}

\section{Introduction}

Clustering is a fundamental task in unsupervised learning. Given features of instances, the unlabeled data set will be partitioned into multiple clusters, where instances from the same cluster are similar according to the measurement defined by a distance function. With fixed features, most of research efforts focus on studying appropriate distance functions and ingenious algorithms have been proposed by different measurements, e.g., k-means clustering~\cite{Lloyd82}, spectral clustering~\cite{Luxburg07}, subspace clustering~\cite{ElhamifarV13}, etc.

\begin{figure}[t]
\centering
\begin{minipage}{0.47\linewidth}
\centering
\includegraphics[height = 1.1in]{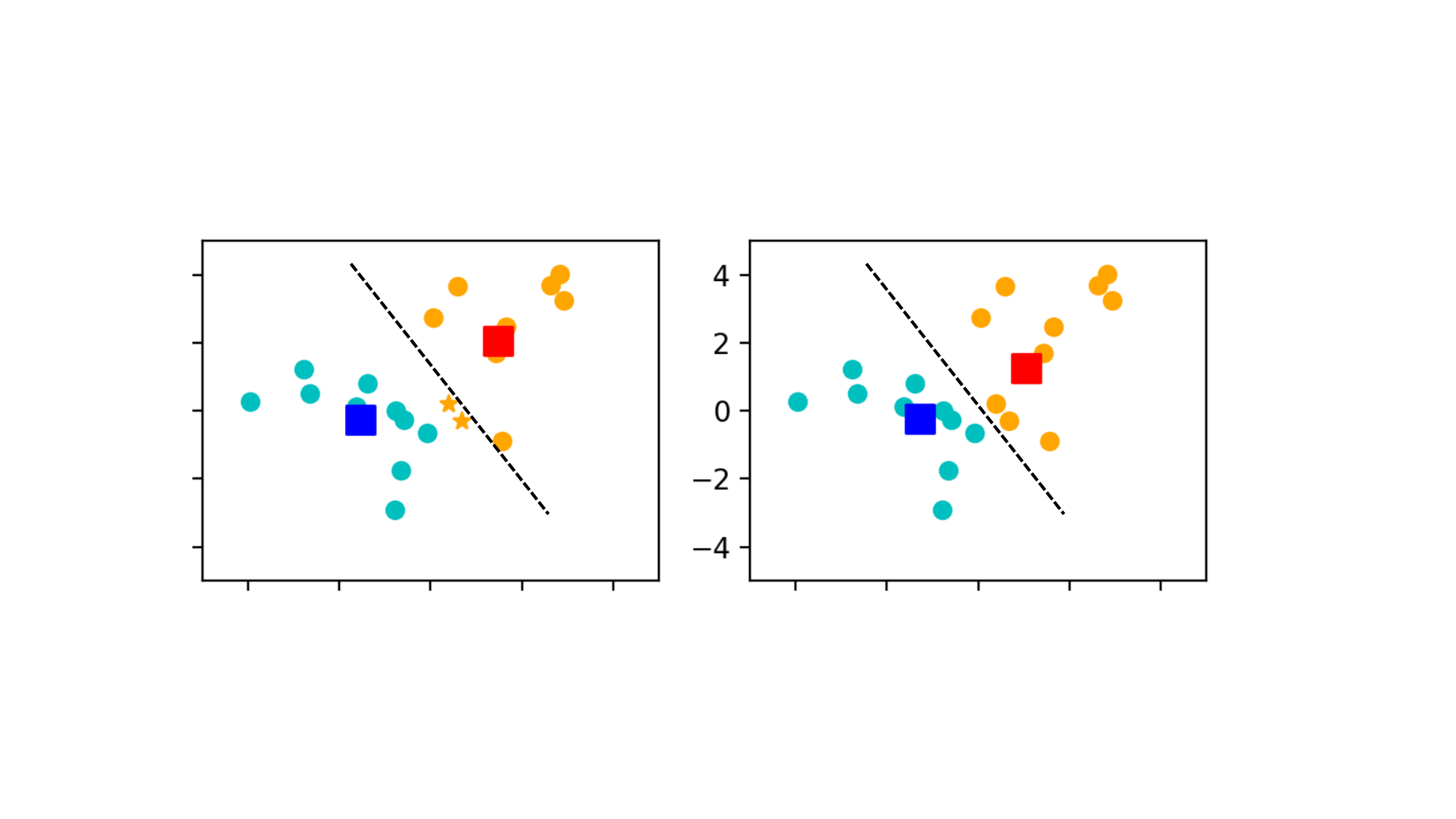}
\mbox{\footnotesize k-means in CoKe: acc=0.9}
\end{minipage}
\begin{minipage}{0.47\linewidth}
\centering
\includegraphics[height = 1.1in]{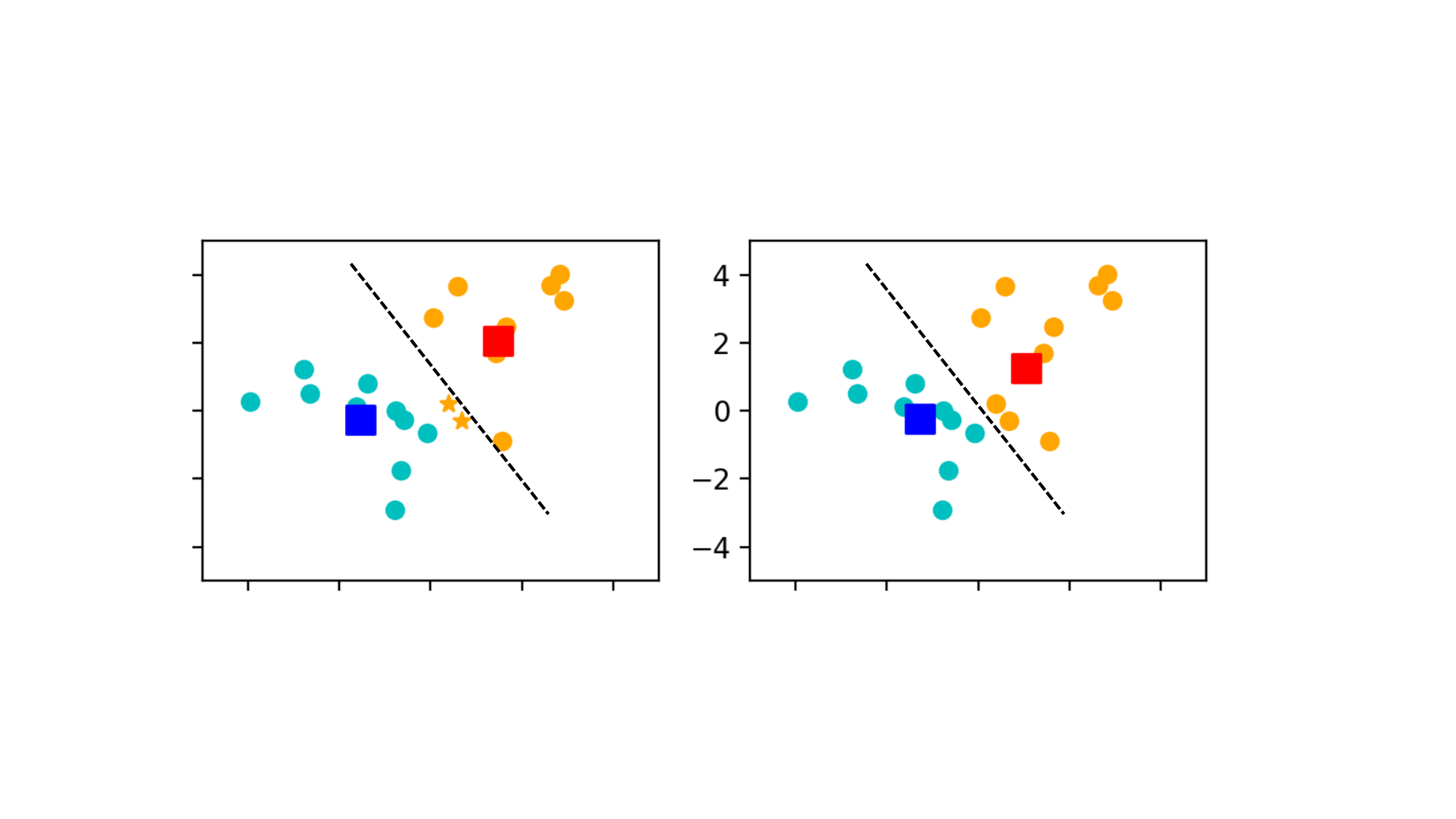}
\mbox{\footnotesize our proposal SeCu: acc=1.0}
\end{minipage}
\caption{An illustration of the proposed method. 10 data points are randomly sampled from two different Gaussian distributions, respectively. Points with the same color are from the same distribution, while squares denote the corresponding cluster centers obtained by different methods. Star indicates the mis-classified data. Unlike k-means that assigns the uniform weight for different instances, our method considers the hardness of instances and is better for discrimination based clustering.}\label{fig:illu}
\end{figure}

With the development of deep neural networks, deep learning is capable of learning representations from raw materials and demonstrates the dominating performance on various supervised tasks~\cite{KrizhevskySH12}. Thereafter, it is also introduced to clustering recently~\cite{CaronBJD18,DangD0WH21, GansbekeVGPG20,HuangGZ20,JiVH19,Li0LPZ021,XieGF16,YangFSH17,ZhongW0HDNL021}. Unlike the conventional clustering, representations of instances are learned with cluster assignments and centers simultaneously in deep clustering, which is more flexible to capture the data distribution. However, the coupled objective can result in a trivial solution that all instances collapse to the uniform features~\cite{CaronBJD18} and designing appropriate clustering criterion in the new scenario becomes challenging. 

Many deep clustering algorithms~\cite{DangD0WH21,GansbekeVGPG20,ZhongW0HDNL021} exploit a two-stage training strategy to decouple representation learning and clustering to avoid collapsing. The recent progress in unsupervised representation learning~\cite{abs-2106-08254,CaronBJD18,ChenK0H20,abs-2111-06377,He0WXG20,coke} shows that informative features capturing the semantic similarity between instances without collapsing can be obtained by pre-training a large set of unlabeled data. Inspired by the observation, those methods leverage a pre-training stage and then focus on developing algorithms to optimize clustering by fine-tuning a pre-trained model in the second stage. By refining the relations between nearest neighbors obtained from the pre-training stage, two-stage methods achieve a significantly better performance than one-stage ones without sufficient representation learning~\cite{HuangGZ20,JiVH19}.

However, the objective of pre-training can be inconsistent with that of clustering, which results in a sub-optimal performance for deep clustering. Note that different from supervised learning where the objective is explicitly defined by labels, that for unsupervised representation learning is arbitrary and various pretext tasks have been proposed, e.g., instance discrimination~\cite{ChenK0H20,He0WXG20}, cluster discrimination~\cite{CaronBJD18,coke}, masked modeling~\cite{abs-2106-08254,abs-2111-06377}, etc. Most of existing two-stage clustering methods adopt instance discrimination, i.e., SimCLR~\cite{ChenK0H20}, for pre-training, whereas it aims to identify each instance as an individual class and the objective is different from clustering that aims to group instances from the same cluster. Moreover, SwAV~\cite{CaronMMGBJ20} demonstrates that clustering itself is also an effective pretext task for representation learning. Therefore, we focus on facilitating one-stage deep clustering that optimizes representations and clustering simultaneously in this work.

Most of existing one-stage methods are proposed solely for representation learning. To tackle the collapsing issue, research efforts are mainly devoted to developing appropriate constraints for cluster assignments, especially for online deep clustering. For example, \cite{AsanoRV20a,CaronMMGBJ20} apply the balanced constraint that each cluster has the same number of instances for clustering. \cite{coke} further relaxes the balanced constraint to a lower-bound size constraint that limits the minimal size of clusters and demonstrates a more flexible assignment. 

After obtaining cluster assignments as pseudo labels, representations and cluster centers can be optimized as in the supervised scenario. For example, \cite{CaronMMGBJ20,CaronTMJMBJ21} learn the encoder network and cluster centers by solving a classification problem with the standard cross entropy loss. \cite{coke} has the same classification task for representation learning while adopting k-means for updating cluster centers. Although these methods achieve a satisfied performance on representation learning, a learning objective tailored for deep clustering has not attracted sufficient attentions.

In this work, we investigate the effective learning task for one-stage deep clustering. By analyzing the standard cross entropy loss for supervised learning, we find that it can be unstable for unsupervised learning. Concretely, the gradient for updating cluster centers consists of two ingredients: gradient from positive instances of the target cluster and that from irrelevant negative ones. However, with a limited mini-batch size in stochastic gradient descent (SGD), there can be no positive ones for a large proportion of clusters (e.g., 90\% on ImageNet) at each iteration. Due to the lack of positive instances, the influence from negative ones is dominating. Unlike supervised learning where labels are fixed, the cluster assignments can change during training in deep clustering. Therefore, the noise from the large variance of negative instances will be accumulated, which makes the optimization unstable. 

To mitigate the problem, we propose a stable cluster discrimination (SeCu) task for deep clustering, which stops the gradient from negative instances for updating cluster centers in the cross entropy loss. Compared with k-means in \cite{coke}, where positive instances have the uniform weight for updating centers, SeCu as a discrimination task considers the hardness of instances and a large weight will be assigned to hard instances for updating. Fig.~\ref{fig:illu} illustrates the hardness-aware clustering criterion implied by SeCu. Besides, we improve the cluster assignment by developing an entropy constraint that regularizes the entropy of assignments over the entire data set. Compared to the optimization with the size constraint~\cite{coke}, our method can reduce the number of variables and hyper-parameters, and thus make the learning more convenient. The main contributions of this work can be summarized as follows.
\begin{itemize}
    \item A novel task is proposed for one-stage deep clustering. SeCu tailors the supervised cross entropy loss by eliminating the influence from the negative instances in learning cluster centers, which makes the training stable when ground-truth labels are unavailable.
    \item A global entropy constraint is exploited to balance the size of clusters and an efficient closed-form solution is developed for online assignment with the constraint. 
    \item A simple framework with a single loss function and encoder is introduced for deep clustering in Eqn.~\ref{eq:obj}. The proposed method is evaluated with the standard protocol on benchmark data sets and ImageNet~\cite{RussakovskyDSKS15}. The superior performance of SeCu confirms its effectiveness for deep clustering.
\end{itemize}

\section{Related Work}
In this section, we briefly review unsupervised representation learning and then focus on clustering.

\subsection{Unsupervised Representation Learning}
By leveraging massive unlabeled data, unsupervised representation learning can learn a pre-trained model that encoders the semantic information from data and helps downstream tasks with fine-tuning. Compared with supervised learning containing labels, one major challenge in unsupervised learning is defining appropriate positive pairs for representation learning. SimCLR~\cite{ChenK0H20} and MoCo~\cite{He0WXG20} apply instance discrimination as the objective that generates positive pairs from diverse views of the same instance and considers other instances as the negative ones. Then, BYOL~\cite{GrillSATRBDPGAP20} and SimSiam~\cite{ChenH21} demonstrate that with appropriate neural network architectures, the applicable models can be obtained with only positive pairs. Besides the work optimizing instance space, Barlow Twins~\cite{ZbontarJMLD21} defines positive pairs in feature space and also learns effective pre-trained models. Recently, learning with masked modeling~\cite{abs-2106-08254,abs-2111-06377} achieves success on the specific backbone of vision transformer~\cite{DosovitskiyB0WZ21} and shows the better fine-tuning performance than the generic objective on the ResNet~\cite{HeZRS16}. 

\subsection{Deep Clustering}
Compared with instance discrimination, cluster discrimination aims to obtain positive pairs consisting of different instances for representation learning. Therefore, it has to learn representations and relations between instances simultaneously. DeepCluster~\cite{CaronBJD18} obtains membership of instances with an offline k-means and then optimizes representations alternately. Some work focuses on representation learning and introduces the balanced constraint that each cluster has the same number of instances for cluster assignments~\cite{AsanoRV20a,CaronMMGBJ20} to mitigate the collapsing problem. Besides, DINO~\cite{CaronTMJMBJ21} applies an additional momentum encoder to further improve the representation. However, the constraint assumes a well balanced distribution, which is hard to capture the ground-truth data distribution. Thereafter, CoKe~\cite{coke} relaxes the constraint to only lower-bound the minimal size of clusters to model the distribution more flexibly.

Besides representation learning, there are many works developed solely for better clustering~\cite{DangD0WH21,GansbekeVGPG20,JiVH19,Li0LPZ021,XieGF16,YangFSH17,ZhongW0HDNL021}. To handle the collapsing issue, two-stage methods~\cite{DangD0WH21,GansbekeVGPG20,ZhongW0HDNL021} leverage the representations from a pre-trained model to obtain nearest neighbors and fine-tune the model for clustering accordingly. In addition, the objective of pre-training can be included for clustering as multi-task learning~\cite{Li0LPZ021}. The work closest to ours is CoKe~\cite{coke} that optimizes representations and clustering simultaneously. Compared with the conventional k-means in CoKe, we propose a novel stable cluster discrimination objective tailored for one-stage deep clustering. Moreover, an entropy constraint is introduced to reduce the number of parameters while demonstrating the competitive performance.

\section{Stable Cluster Discrimination}
\subsection{Cluster Discrimination}

Given a data set with $N$ images $\{x_i\}_{i=1}^N$, if the corresponding labels are accessible as $\{y_i\}_{i=1}^N$, the literature from distance metric learning~\cite{QianSSHTLJ19} shows that appropriate representations of examples can be learned by optimizing a classification task as
\[\min_{\theta} \sum_{i=1}^N \ell(x_i,y_i;\theta)\]
where $\ell(\cdot)$ is the cross entropy loss with the normalized Softmax operator as suggested in \cite{QianSSHTLJ19}. $\theta$ denotes the parameters of the deep neural network.

For unsupervised learning that the label information is unavailable, the objective of cluster discrimination with $K$ clusters can be written as
\begin{eqnarray}\label{eq:unlabel}
\min_{\theta_f,\{\w_j\}, y_i\in\Delta} \mathcal{L} = \sum_{i=1}^N\sum_{j=1}^K -y_{i,j}\log(p_{i,j})
\end{eqnarray}
where $y_i$ denotes the learnable label for $x_i$ and $\Delta = \{y_i|\sum_{j=1}^K y_{i,j}=1, \forall j, y_{i,j}\in \{0,1\}\}$. With the normalized Softmax operator, the prediction $p_{i,j}$ can be computed 
\begin{eqnarray}\label{eq:prob}
p_{i,j} = \frac{\exp(\x_i^\top \w_j/\lambda)}{\sum_{k=1}^K \exp(\x_i^\top \w_k/\lambda)}
\end{eqnarray}
where $\x_i = f(x_i)$ and $f(\cdot)$ denotes the encoder network. $\theta_f$ contains parameters of $f$. $\{\w_j\}_{j=1}^K$ consists of cluster centers. $\lambda$ is the parameter of temperature and $\forall i,j, \|\x_i\|_2 = \|\w_j\|_2=1$ by normalization.

Compared with the supervised counterpart, the problem in Eqn.~\ref{eq:unlabel} has to optimize cluster assignments $\{y\}$, cluster centers $\{\w\}$, and representation encoder network $f$ simultaneously. While most of existing work focus on optimizing $\{y\}$ with different constraints to avoid collapsing, this work investigates the learning objective for $\{\w\}$ and $f$, and a new clustering criterion is introduced accordingly.

\subsection{Stable Loss for Optimization with Mini-batch}
Unlike supervised learning, where labels are fixed for examples, cluster assignments $\{y\}$ in unsupervised learning are dynamic with the training of instance representations and cluster centers. Therefore, the original cross entropy loss becomes unstable for unsupervised discrimination. The problem can be elaborated by analyzing the updating criterion for cluster centers.

Letting $y_i$ denote the label of $\x_i$, the gradient of $\w_j$ from the standard cross entropy loss in Eqn.~\ref{eq:unlabel} can be computed as
\[\nabla_{\w_j}\mathcal{L} = \frac{1}{\lambda} (\sum_{i:y_i=j} (p_{i,j}-1)\x_i + \sum_{k:y_k\neq j} p_{k,j}\x_k)\]
The former term is to pull the center $\w_j$ to the assigned instances and the latter one is to push it away from instances of other clusters. However, the following Propositions show that this updating is unstable for deep clustering. The detailed proof can be found in the appendix.

\begin{prop}\label{prop:1}
When sampling a mini-batch of $b$ instances for $K$ clusters, there are no positive instances for at least $K-b$ clusters.
\end{prop}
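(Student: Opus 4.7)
The plan is to prove this by a straightforward pigeonhole / counting argument, so the statement is essentially immediate once one unpacks the definitions. Recall from Eqn.~\ref{eq:unlabel} that each assignment $y_i$ lies in the simplex $\Delta$, i.e., is a one-hot vector, so every instance is a positive example for exactly one of the $K$ clusters.

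First I would set up notation: given a mini-batch $B$ with $|B|=b$, define $S = \{j \in \{1,\dots,K\} : \exists\, i \in B,\ y_i = j\}$ to be the set of clusters that receive at least one positive instance from the batch. Since each $i \in B$ contributes exactly one cluster index to $S$, the map $i \mapsto y_i$ from $B$ to $S$ is surjective, hence $|S| \le |B| = b$.

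Next I would conclude: the set of clusters with \emph{no} positive instance in the batch is the complement $\{1,\dots,K\} \setminus S$, whose cardinality is $K - |S| \ge K - b$. This gives the claimed lower bound. The bound is tight, attained when all $b$ instances in the batch happen to have pairwise distinct cluster labels.

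There is no real obstacle here; the only subtlety worth flagging explicitly is that $\Delta$ is the \emph{discrete} simplex of one-hot vectors (not the probability simplex), which is precisely what licenses treating each $y_i$ as a single cluster index and makes the counting valid. In the typical regime of interest, $b \ll K$ (e.g., $b$ in the hundreds while $K$ is in the thousands on ImageNet), so the conclusion immediately implies that the vast majority of clusters lack positive instances in any given mini-batch, which is the qualitative fact used to motivate the subsequent stability analysis.
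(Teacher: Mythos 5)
Your proof is correct and rests on the same counting fact as the paper's: each instance is positive for exactly one cluster, so at most $b$ clusters can receive a positive instance, leaving at least $K-b$ without one. The paper phrases this as a proof by contradiction while you argue directly via the surjection $i \mapsto y_i$, but the two are the same pigeonhole argument; your direct version is, if anything, slightly cleaner.
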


\begin{prop} \label{prop:2}
Let $\mathrm{Var}_{pos}$ and $\mathrm{Var}_{neg}$ be the variance of sampled positive and negative instances, respectively. Assuming that each instance has unit norm and the norm of the cluster mean is $a$, we have
$\mathrm{Var}_{neg} = \OO(\frac{1}{1-a^2}) \mathrm{Var}_{pos}$.
\end{prop}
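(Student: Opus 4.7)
The plan is to read ``variance'' in the multivariate sense, namely $\mathrm{Var}(X) = \mathbb{E}[\|X - \mathbb{E}[X]\|_2^2]$ (the trace of the covariance matrix), and then exploit the elementary identity
\[ \mathrm{Var}(X) \;=\; \mathbb{E}[\|X\|_2^2] \;-\; \|\mathbb{E}[X]\|_2^2, \]
which is just the vector-valued Pythagorean decomposition. Under the unit-norm assumption, the first term equals $1$ for both the positive and the negative sample, so the whole proposition reduces to comparing the squared norms of two empirical means.

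First I would dispense with the positive side: by hypothesis the mean of the positive instances (i.e., of the $\x_i$ assigned to cluster $j$) has norm exactly $a$, so the identity above gives $\mathrm{Var}_{pos} = 1 - a^2$ in one line. Next, for the negative side, the negative mean $\mu_{neg}$ is a convex combination of the other cluster means. For an $\OO(\cdot)$ bound it is enough to use the trivial estimate $\|\mu_{neg}\|_2^2 \ge 0$, which immediately yields $\mathrm{Var}_{neg} \le 1$. Dividing the two bounds gives
\[ \frac{\mathrm{Var}_{neg}}{\mathrm{Var}_{pos}} \;\le\; \frac{1}{1-a^2}, \]
which is precisely the asserted $\OO(1/(1-a^2))$ relation.

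The main conceptual obstacle is really just pinning down the right notion of variance; once one commits to the trace-of-covariance convention, the rest is a single application of the norm decomposition. A secondary point worth flagging in the writeup is that the bound becomes vacuous when $a$ is bounded away from $1$ and is interesting precisely in the regime $a \to 1$ that one expects for a well-formed cluster: tightly packed positives have vanishing $\mathrm{Var}_{pos}$, while the negatives, drawn from $K-1$ other clusters, retain $\OO(1)$ variance. This is exactly the imbalance that makes the negative-gradient term dominant in the updating rule for $\w_j$ derived just before the proposition, and therefore motivates the stop-gradient design of SeCu introduced in the next subsection. If a sharper constant were ever desired, one could refine $\|\mu_{neg}\|_2$ under a balance assumption on the other $K-1$ clusters, but this is not needed for the big-$\OO$ statement.
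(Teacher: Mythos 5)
Your proof is correct and rests on the same key identity as the paper's: writing the (trace-of-covariance) variance as $\mathbb{E}[\|X\|_2^2]-\|\mathbb{E}[X]\|_2^2 = 1-\|\mu\|_2^2$ under the unit-norm assumption, so that $\mathrm{Var}_{pos}=1-a^2$. The only divergence is in handling the negative mean: you use the trivial bound $\|\mu_{neg}\|_2^2\ge 0$ to get $\mathrm{Var}_{neg}\le 1$, which suffices for the $\OO(1/(1-a^2))$ claim and requires no further assumptions, whereas the paper additionally assumes equally sized clusters and uniformly distributed centers with $E_\mu[\mu]=\mathbf{0}$ to compute $\|\mu_{neg}\|_2^2=a^2/(K-1)$ exactly, yielding the sharper constant $\mathrm{Var}_{neg}=\bigl(\tfrac{K-2}{(K-1)(1-a^2)}+\tfrac{1}{K-1}\bigr)\mathrm{Var}_{pos}$ and making the dependence on $K$ explicit.
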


\paragraph{Remark} Proposition~1 implies that with cross entropy loss, a large proportion of cluster centers will be updated only by negative instances. Proposition~2 indicates that the variance of sampled negative instances is much larger than that of positive ones when each cluster is compact as $a$ approaching 1. Due to the small size of a mini-batch in training deep neural networks, the variance cannot be reduced sufficiently. 

When $K=10,000$ and $b=1,024$ as in our multi-clustering settings for ImageNet, at least about 90\% centers will only access a mini-batch of negative instances at each iteration. Without ground-truth labels, the bias in updating will be accumulated and mislead the learning process.

To mitigate the problem, we propose to eliminate the direction of negative instances from gradient for stable training as
\[\nabla_{\w_j}\mathcal{L} = \frac{1}{\lambda} \sum_{i:y_i=j} (p_{i,j}-1)\x_i\]
The corresponding stable cluster discrimination loss becomes
\begin{align}\label{eq:pce}
&\ell_{\mathrm{SeCu}}(x_i,y_i) = \nonumber\\
&-\log(\frac{\exp(\x_i^\top \w_{y_i}/\lambda)}{\exp(\x_i^\top \w_{y_i}/\lambda)+\sum\limits_{k:k\neq y_i}\exp(\x_i^\top\tilde{\w}_k/\lambda)})
\end{align}
where $\tilde{\w}_k$ denotes $\w_k$ with the stop-gradient operator. Compared with the standard cross entropy loss, cluster centers in SeCu loss will be updated only by positive instances, which is more stable for deep clustering when optimizing with mini-batches. 

In addition, the analysis in the following theorem demonstrates the novel hardness-aware clustering criterion for deep clustering implied by the proposed objective.

\begin{thm}
When fixing $\{y_i\}$ and $\{\x_i\}$, let $\{\w^*\}$ be the optimal solution of the problem with loss function in Eqn.~\ref{eq:pce} and assume $\forall i, \|\x_i\|_2=1; \forall j, \|\w_j\|_2=1$, then we have
\begin{eqnarray}\label{eq:w}
\w_j^* = \Pi_{\|\w\|_2=1}(\frac{\sum_{i:y_i=j} (1-p_{i,j})\x_i}{\sum_{i:y_i=j} 1- p_{i,j}})
\end{eqnarray}
where $\Pi_{\|\w\|_2=1}$ projects the vector to the unit norm.
\end{thm}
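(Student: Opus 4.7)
The plan is to view the stable cluster discrimination objective of Eqn.~\ref{eq:pce} as a constrained optimization in each center $\w_j$ over the unit sphere, compute the gradient that the stop-gradient convention prescribes, and then apply a Lagrange multiplier argument on $\|\w_j\|_2=1$. First I would observe that $\w_j$ enters $\ell_{\mathrm{SeCu}}(x_i,y_i)$ without stop-gradient only when $y_i=j$; for any $i$ with $y_i\neq j$, the sole occurrence of $\w_j$ sits inside the detached $\tilde{\w}_j$ in the denominator and contributes zero. Differentiating the log-softmax in the standard manner (using that stop-gradient changes only the derivative, not the value, so the probability at which we evaluate still equals $p_{i,j}$ of Eqn.~\ref{eq:prob}) yields
\[
\nabla_{\w_j}\mathcal{L}_{\mathrm{SeCu}} \;=\; \frac{1}{\lambda}\sum_{i:y_i=j}(p_{i,j}-1)\,\x_i,
\]
reproducing the expression displayed just above Eqn.~\ref{eq:pce}.

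Next I would impose first-order optimality on the sphere. Because $\|\w_j^*\|_2=1$ is the only active constraint, KKT stationarity forces the gradient to be collinear with $\w_j^*$: there exists $\mu\in\RR$ with $\sum_{i:y_i=j}(1-p_{i,j}^*)\,\x_i = \mu\,\w_j^*$. Each coefficient $1-p_{i,j}^*$ is strictly positive (a softmax with $K\ge 2$ satisfies $p_{i,j}^*\in(0,1)$), so the scalar $S_j := \sum_{i:y_i=j}(1-p_{i,j}^*)$ is positive. Dividing both sides by $S_j$ rewrites the left-hand side as the convex combination of assigned features appearing inside $\Pi_{\|\w\|_2=1}$ in Eqn.~\ref{eq:w}; since $S_j>0$, the direction is preserved, and normalizing to unit length (the action of $\Pi_{\|\w\|_2=1}$) delivers the stated closed form.

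The main obstacle I anticipate is the implicit, fixed-point nature of the identity: $p_{i,j}^*$ itself depends on $\w_j^*$ through Eqn.~\ref{eq:prob}, so Eqn.~\ref{eq:w} is a self-consistency condition rather than an explicit formula, and the write-up should make this transparent. A secondary but necessary point is to rule out the antipodal sign $\mu<0$ permitted by bare collinearity: this is handled by noting that $\ell_{\mathrm{SeCu}}(x_i,y_i)$ is strictly decreasing in $\x_i^\top\w_{y_i}$, so minimizing over the sphere must align $\w_j$ \emph{with}, rather than against, the positively weighted sum of its assigned features, picking the $+\w_j^*$ branch of the projection. All remaining steps, namely scalar factoring and normalization, are routine.
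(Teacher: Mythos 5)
Your proof is correct, and it takes the direct route that the paper's appendix mentions in one sentence (``we can obtain the solution by letting the gradient of $\w$ be 0'') but does not actually carry out. The paper's written proof instead rewrites the inner products as squared distances using the unit-norm assumptions, introduces an auxiliary distribution $q_i$ with an entropy term (the Gibbs variational form of the log-sum-exp), recovers $q_i = p_i$ by the KKT conditions, and only then sets the gradient to zero and projects; that detour is what motivates the paper's interpretation of Eqn.~\ref{eq:w} as one step of projected gradient descent with unit step size, and hence the alternating update used in the algorithm. Your version --- differentiate the log-softmax under the stop-gradient convention (noting that only terms with $y_i=j$ contribute), impose stationarity on the sphere to obtain the collinearity $\sum_{i:y_i=j}(1-p_{i,j}^*)\x_i=\mu\,\w_j^*$, use positivity of the weights $1-p_{i,j}^*$ to factor out the normalizer, and fix the sign of $\mu$ via monotonicity of the loss in $\x_i^\top\w_{y_i}$ --- is more elementary and is in fact more careful on two points the paper glosses over: the choice between the two antipodal stationary points (the paper simply asserts the solution ``will be projected''), and the explicit acknowledgment that Eqn.~\ref{eq:w} is a self-consistency condition rather than an explicit formula, since $p_{i,j}^*$ depends on $\w_j^*$. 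What you forgo relative to the paper is only the by-product of its reformulation, namely the built-in justification for the single-step closed-form center update; your argument establishes the stated fixed-point characterization just as well.
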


\paragraph{Remark} For k-means in CoKe~\cite{coke}, cluster centers will be averaged over all assigned instances with the uniform weight
\begin{eqnarray}\label{eq:cokew}
\w_j = \Pi_{\|\w\|_2=1}(\frac{\sum_{i:y_i=j} \x_i}{\sum_{i} 1(y_i=j)})
\end{eqnarray}
where $1(\cdot)$ is the indicator function.
On the contrary, our proposal considers the weight of each instance by its hardness, i.e., $p_{i,y_i}$. By assigning a large weight to hard instances, the corresponding center can capture the distribution of hard instances better, which is illustrated in Fig.~\ref{fig:illu}. 

Our formulation also implies CoKe as a special case. Concretely, by increasing the temperature $\lambda$ for updating $\w$ to be infinite, we have $p_{i,j}=1/K$ and Eqn.~\ref{eq:w} will degenerate to Eqn.~\ref{eq:cokew}.

Finally, besides SGD, Eqn.~\ref{eq:w} also suggests an alternative updating strategy for $\w$. Since $p_{i,j}$ is computed by $\w$, it may still require multiple iterations with Eqn.~\ref{eq:w} for convergence. However, the representations of instances are also updated by training and we can keep a single update for centers at each iteration and improve centers along with the learning of representations. Compared with SGD for learning cluster centers, the strategy eliminates the learning rate for centers with a comparable performance as shown in the ablation study. 

Given the learning objective SeCu, we will elaborate the proposed deep clustering method in next subsection.

\subsection{SeCu for Deep Clustering}
With the proposed loss function, the objective of stable cluster discrimination for deep clustering can be written as
\begin{align}\label{eq:obj}
&\min_{\theta_f,\{\w_j\}, y_i\in\Delta} \sum_{i=1}^N \ell_{\mathrm{SeCu}}(x_i, y_i)\nonumber\\
&s.t.\quad  h_m(Y)\geq b_m,\quad m=1,\dots, M
\end{align}
where $Y=[y_1,\dots, y_N]$, $M$ denotes the number of constraints, and $h_m(\cdot)$ is the $m$-th constraint for cluster assignment. Considering that variables are entangled, the problem is solved alternately.

\paragraph{Update of $\theta_f$}
First, when fixing $\{y^{t-1}\}$ and $\{\w^{t-1}\}$ from the last epoch, the sub-problem for representation learning in the $t$-th epoch becomes
\[\min_{\theta_f} \sum_{i=1}^N \ell_{\mathrm{SeCu}}(x_i, y_i^{t-1})\]
where the SeCu loss degenerates to the standard cross entropy loss with fixed $\{\w\}$ and can be optimized with SGD. The one-hot label $y^{t-1}$ is kept from the $(t-1)$-th epoch, which makes the optimization consistent between adjacent epochs but the updating for representations may be delayed. To incorporate the information from the current epoch, we include two views of augmentations for the individual instance at  each iteration, which is prevalent for representation learning~\cite{ChenK0H20,He0WXG20,coke}. 

Let $x_i^1$ and $x_i^2$ be two perturbed views of the original image with random augmentations, and the prediction is
\[p_{i,j}^s = \frac{\exp(\x_i^{s\top}\w_{j}^{t-1}/\lambda)}{\sum_j^K\exp(\x_i^{s\top}\w_{j}^{t-1}/\lambda)};\quad s=\{1,2\}\]
where $\x_i^s = f_t(x_i^s)$ is extracted by the current encoder. The soft label for each view can be obtained as~\cite{coke}
\begin{align}\label{eq:soft}
y_i^1 = \tau y_i^{t-1} + (1-\tau) p_i^2;\quad y_i^2 = \tau y_i^{t-1} + (1-\tau) p_i^1
\end{align}
The soft label contains the prediction from the other view of the same instance, which optimizes the consistency between different views at the same iteration.
The problem with two-view optimization can be written as
\[\min_{\theta_f} \sum_{i=1}^N \ell_{\mathrm{SeCu}}(x_i^1, y_i^1)+\ell_{\mathrm{SeCu}}(x_i^2, y_i^2)\]

\paragraph{Update of $y$} 
When fixing $\x_i^t$ and $\{\w^t\}$, the cluster assignment can be updated by solving the problem
\begin{align*}
&\min_{y_i\in\Delta} -\sum_{i=1}^N\sum_{j=1}^K y_{i,j} \log(p_{i,j})\\
&s.t.\quad  h_m(Y)\geq b_m,\quad m=1,\dots, M
\end{align*}
where $p_{i,j}$ is defined by $\x_i^t$ and $\w^t$ as in Eqn.~\ref{eq:prob}.
Without the constraints $\{h_m\}$, the objective implies a greedy solution that assigns each instance to the most related cluster. It can incur the trivial solution of collapsing, especially for online deep clustering, where each instance can only be accessed once in each epoch and cluster assignments cannot be refined with multiple iterations over the entire data. Therefore, many effective constraints are developed to mitigate the challenge. 

First, we investigate the size constraint that is prevalent in existing methods~\cite{CaronMMGBJ20,coke}. Following \cite{coke}, the cluster size can be lower-bounded as
\begin{align}\label{eq:obsize}
&\min_{y_i\in\Delta} -\sum_{i=1}^N\sum_{j=1}^K y_{i,j} \log(p_{i,j})\nonumber\\
&s.t.\quad  \sum_i y_{i,j} \geq \gamma N/K,\quad j=1,\dots, K
\end{align}
where $\gamma$ is the proportion to the average size and $\gamma=1$ implies the balanced constraint. Let $\rho_j$ denote dual variables for the $j$-th lower-bound constraint. The problem becomes
\[\max_{\rho:\rho\geq 0}\min_{y_i\in\Delta} -\sum_i\sum_j y_{i,j} \log(p_{i,j}) - \sum_j \rho_j (\sum_i y_{i,j}-\gamma N/K)\]
When a mini-batch of instances arrive, the cluster assignment for each instance can be obtained via a closed-form solution
\begin{eqnarray}\label{eq:y}
y_{i,j}^t = \left\{\begin{array}{ll}1\quad&j=\arg\min_j -\log(p_{i,j}) - \rho_j  \\ 0\quad&o.w. \end{array} \right.
\end{eqnarray}
Then, the dual variables will be updated by stochastic gradient ascent. More details can be found in \cite{coke}.

The size constraint is capable of avoiding collapsing problem explicitly, but it introduces additional dual variables to help assignment and has to optimize them in training. To simplify the optimization, a global entropy constraint is investigated to balance the distribution over all clusters. Compared with the size constraint, the bound for the cluster size is implicit with the entropy constraint. Nevertheless, as we will demonstrate, it can help reduce the number of variables and hyper-parameters, which is more friendly for users. 

Given the set of cluster assignments $\{y\}$, the entropy of the whole data set can be defined as
\[H(y) = -\sum_{j=1}^K \frac{\sum_i^N y_{i,j}}{N}\log(\frac{\sum_i^N y_{i,j}}{N})\]
With the entropy as the regularization, the objective can be written as
\begin{align*}
&\min_{y_i\in\Delta} -\sum_{i=1}^N\sum_{j=1}^K y_{i,j} \log(p_{i,j})\quad s.t.\quad H(y) \geq \gamma
\end{align*}

Compared with the problem consisting of $K$ constraints in Eqn.~\ref{eq:obsize}, there is only a single constraint that controls the size of all clusters simultaneously.
According to the dual theory~\cite{boyd2004convex}, the problem is equivalent to
\begin{eqnarray}\label{eq:entropy}
\min_{y_i\in\Delta} -\sum_{i=1}^N\sum_{j=1}^K y_{i,j} \log(p_{i,j}) - \alpha H(y)
\end{eqnarray}

Since one-hot labels of all instances are kept in memory, the optimal solution can be obtained by enumerating. When optimizing the label for $x_i$ and fixing other labels, the closed-form solution for $y_i$ is
\begin{align}\label{eq:yentropy}
&y_{i,j}^t = \left\{\begin{array}{ll}1\quad&j=\arg\min_{j} -\log(p_{i,j})-\alpha H(y^{t-1},y_{i:j})   \\ 0\quad&o.w. \end{array} \right.
\end{align}
where $H(y^{t-1},y_{i:j})$ replaces the previous label $y_i$ of $\x_i$ by $j$ as $H(y^{t-1},y_{i:j})=H([y_1^{t-1},\dots,y_{i-1}^{t-1},y_{i+1}^{t-1},\dots,y_{n}^{t-1};y_{i,j}=1])$.
The convergence is guaranteed as in the following theorem.
\begin{thm}
Let $\mathcal{L}(Y)$ denote the objective in Eqn.~\ref{eq:entropy}. If updating cluster assignments sequentially according to Eqn.~\ref{eq:yentropy}, we have $\mathcal{L}(Y^t)\leq \mathcal{L}(Y^{t-1})$.
\end{thm}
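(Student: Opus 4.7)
The plan is to recognize Eqn.~\ref{eq:yentropy} as a discrete coordinate-descent step on the one-hot domain $\Delta^N$ and then chain one-step monotonicity across the $N$ sequential updates. First I would fix all assignments $\{y_k\}_{k\neq i}$ at their current values and write the restriction of $\mathcal{L}$ from Eqn.~\ref{eq:entropy} to $y_i$ alone. Since $y_i\in\Delta$ is one-hot with $y_{i,j}=1$ for some index $j$, the data-fit term $-\sum_j y_{i,j}\log p_{i,j}$ collapses to $-\log p_{i,j}$, and by the definition of $H(y^{t-1},y_{i:j})$ the regularizer contributes $-\alpha H(y^{t-1},y_{i:j})$; all summands that depend only on $\{y_k\}_{k\neq i}$ are constants in this restriction.

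With this restriction in hand, I would observe that the rule in Eqn.~\ref{eq:yentropy} selects exactly the index $j$ minimizing $-\log p_{i,j}-\alpha H(y^{t-1},y_{i:j})$, which coincides with the restricted objective up to an additive constant. Hence the new $y_i$ is a global minimizer over the $K$ one-hot choices given the current state, and in particular $\mathcal{L}$ weakly decreases after this single-coordinate update.

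The final step is a telescoping argument. I would introduce intermediate states $Y^{(0)}=Y^{t-1},Y^{(1)},\ldots,Y^{(N)}=Y^{t}$, where $Y^{(i)}$ differs from $Y^{(i-1)}$ only in the $i$-th coordinate, updated by Eqn.~\ref{eq:yentropy} using the current neighbors. The per-coordinate monotonicity yields $\mathcal{L}(Y^{(i)})\le \mathcal{L}(Y^{(i-1)})$ for every $i$, and chaining gives $\mathcal{L}(Y^{t})\le \mathcal{L}(Y^{t-1})$. The only subtlety, and the main (though minor) obstacle worth checking carefully, is that the entropy $H(y)$ is not additively separable across coordinates; one must verify that once $\{y_k\}_{k\neq i}$ are held fixed, $H$ truly reduces to a function of $y_i$ alone, so that updating $y_i$ does not inadvertently change the contribution of other coordinates. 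The notation $H(y^{t-1},y_{i:j})$ encodes precisely this restriction, so no hidden coupling invalidates the coordinate-descent monotonicity, and the result follows.
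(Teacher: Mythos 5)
Your proposal is correct and is essentially the paper's argument spelled out in full: the paper's one-line proof ("directly from the optimality of the closed-form solution") is exactly your observation that Eqn.~\ref{eq:yentropy} is an exact coordinate-wise minimization of $\mathcal{L}$ restricted to $y_i$, so each sequential update weakly decreases the objective and the inequality follows by telescoping. Your added care in checking that $H$ restricted to $y_i$ (with the other assignments held at their current values) is what the notation $H(y^{t-1},y_{i:j})$ encodes is a useful clarification the paper leaves implicit, but it is not a different route.
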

\begin{proof}
It is directly from the optimality of the closed-form solution in Eqn.~\ref{eq:yentropy}.
\end{proof}

\paragraph{Remark} While some deep clustering methods apply the entropy regularization for learning, it is defined over a mini-batch of instances and optimized by SGD~\cite{GansbekeVGPG20}. On the contrary, our method leverages the entropy of the entire data set that can capture the global distribution better. Moreover, the entropy in our method can be maximized with a closed-form solution, which is more efficient for learning.

With the two-view optimization, the updating becomes
\begin{align*}
j=\arg\min_{j} -(\log(p_{i,j}^1)+\log(p_{i,j}^2))/2-\alpha H(y^{t-1},y_{i:j}) 
\end{align*}

\paragraph{Update of $\w$} With fixed $\x_i^t$ and the updated pseudo label $\{y^t\}$, cluster centers can be optimized by minimizing SeCu loss over two views of augmentations by SGD
\[\min_{\{\w_j\}} \sum_{i=1}^N \ell_{\mathrm{SeCu}}(\x_i^{1}, y_i^t)+\ell_{\mathrm{SeCu}}(\x_i^{2}, y_i^t)\]
Alternatively, $\w$ can be updated by a closed-form solution
\[\w_j^{t:B} = \Pi_{\|\w\|_2=1}(\frac{\sum_{i:y_i=j}^B (1-p_{i,j})\x_i^t}{\sum_{i:y_i=j}^B 1- p_{i,j}})
\]
where $B$ denotes the total number of instances received in the current epoch.

The proposed method is easy to implement and the whole framework is illustrated in Alg.~\ref{alg:secu}. Compared with the supervised discrimination, our main computational overhead is from cluster assignment, which is negligible due to the online strategy.

\begin{algorithm}[!ht]
\caption{Pseudo-code of \textbf{S}tabl\textbf{e} \textbf{C}l\textbf{u}ster Discrimination (SeCu) for One-stage Deep Clustering.}\label{alg:secu}
\label{alg:secu}
\definecolor{codeblue}{rgb}{0.25,0.5,0.5}
\lstset{
  backgroundcolor=\color{white},
  basicstyle=\fontsize{7.2pt}{7.2pt}\ttfamily\selectfont,
  columns=fullflexible,
  breaklines=true,
  captionpos=b,
  commentstyle=\fontsize{7.2pt}{7.2pt}\color{codeblue},
  keywordstyle=\fontsize{7.2pt}{7.2pt},
}
\begin{lstlisting}[language=python]
# f: encoder network
# w: cluster centers
# w_p: centers from the last epoch
# y: list of pseudo one-hot labels (Nx1)
# tau: weight for labels from last epoch
# lambda: temperature

# keep last cluster centers before each epoch
w_p = w.detach() 
# train one epoch
for x in loader:  # load a minibatch with b samples
    x_1, x_2 = f(aug(x)), f(aug(x)) # two random views
    y_x = y(x_id) # retrieve label from last epoch
    # compute prediction for each view
    p_1 = softmax(x_1 @ w_p / lambda)
    p_2 = softmax(x_2 @ w_p / lambda)
    # obtain soft label for discrimination
    y_1 = tau * y_x + (1-tau) * p_2 
    y_2 = tau * y_x + (1-tau) * p_1
    # loss for representation learning
    loss_x = (SeCu(p_1, y_1) + SeCu(p_2, y_2)) / 2
    # update prediction for clustering
    p_1 = softmax(x_1.detach() @ w / lambda)
    p_2 = softmax(x_2.detach() @ w / lambda)
    # update cluster assignments with constraints
    y(x_id) = y_x = cluster_assign(p_1, p_2) 
    # loss for cluster centers
    loss_w = (SeCu(p_1, y_x) + SeCu(p_2, y_x)) / 2
    # update encoder and cluster centers
    loss = loss_x + loss_w
    loss.backward() 
\end{lstlisting}
\end{algorithm}

\section{Experiments}
The performance of the proposed method is evaluated on CIFAR-10~\cite{krizhevsky2009learning} , CIFAR-100~\cite{krizhevsky2009learning}, STL-10\cite{CoatesNL11} and ImageNet~\cite{RussakovskyDSKS15}. To demonstrate the generalization performance of SeCu, we follow the setting in SCAN~\cite{GansbekeVGPG20} that trains and evaluates models on the standard train/test split provided by the original data set. For STL-10 that contains an additional noisy unlabeled data set, we first learn the model on the training and noisy set, and then on the target training data as suggested by other methods~\cite{GansbekeVGPG20,ZhongW0HDNL021}. The standard metrics for clustering, including clustering accuracy (ACC), normalized mutual information (NMI) and adjusted rand index (ARI) are reported for evaluation. For a fair comparison, we follow the common practice to configure the architecture and training protocol for our method. Concretely, ResNet-50~\cite{HeZRS16} is applied on ImageNet, while ResNet-18 is adopted for other data sets. The key settings for ResNet-18 are elaborated as follows, while those of ResNet-50 follow \cite{coke}. More details can be found in the appendix.

\paragraph{Architecture} Besides backbone network, a 2-layer MLP head is attached and the output dimension is $128$. The similar architecture is adopted in \cite{coke,ZhongW0HDNL021}. With the MLP head, the total number of parameters is almost the same as the original ResNet-18. The discussion about the MLP head can be found in Sec.~\ref{sec:mlp}. After the MLP head, the learned representation will be classified by a fully-connected (FC) layer that encodes cluster centers. The size of the FC layer is $128\times K$. As a clustering method, we let $K$ be the number of ground-truth classes as in \cite{GansbekeVGPG20} and have the direct prediction from the model as cluster assignments for evaluation. Following \cite{AsanoRV20a,GansbekeVGPG20}, $10$ different classification heads are applied as multi-clustering, which benefits the target clustering. To avoid the problem of selecting an appropriate head for evaluation when $10$ heads have the same $K$, we have a varying $cK$ as the number of clusters for each head, where $c\in\{1,\dots,10\}$. The prediction from the head with $c=1$ is reported for comparison.

\paragraph{Optimization} To reduce efforts of parameter tuning and reuse the parameter of \cite{coke}, we have $\x_i^\top\w_j$ in lie of $\log(p_{i,j})$ in Eqn.~\ref{eq:y} and \ref{eq:yentropy} for assignment. Searching the parameter for the latter one can achieve the similar performance. Before training the encoder network, one epoch is adopted to initialize cluster assignments and centers. The encoder network is optimized by SGD with a batch size of $128$. The momentum and learning rate are set to be $0.9$ and $0.2$, respectively. Moreover, the first 10 epochs are applied for warm-up and the cosine decay for learning rate is adopted subsequently. For CIFAR-10, the model is optimized by $400$ epochs and cluster centers are learned by SGD with a constant learning rate of $1.2$. For other challenging data sets, epochs and learning rate for centers are $800$ and $0.8$, respectively. $\tau$ for the soft label as in Eqn.~\ref{eq:soft} and the temperature $\lambda$ are fixed as $0.2$ and $0.05$. For the size constraint, the lower-bound parameter $\gamma$ and the learning rate of dual variables $\eta_\rho$ are set to be $0.9$ and $0.1$, respectively. The only parameter $\alpha$ for the proposed global entropy constraint is set to be $6N/50$ , where $N$ is the total number of instances. The weight is scaled according to the ablation study on CIFAR-10 as illustrated in Sec.~\ref{sec:alpha}. 

\paragraph{Augmentation} Augmentation is essential for the success of unsupervised representation learning~\cite{ChenK0H20}. For a fair comparison, we apply the prevalent settings in existing work~\cite{ChenK0H20,GrillSATRBDPGAP20}. Concretely, random crop, color jitter, random gray scale, Gaussian blur, solarize and random horizontal flip are included as augmentations. Considering that the resolution of images in CIFAR and STL-10 is much smaller than those in ImageNet, we let the minimal scale of random crop be $0.3$, $0.2$, and $0.2$ for CIFAR-10, CIFAR-100, and STL-10, respectively. That for ImageNet is kept as $0.08$. Other parameters are the same for all data sets and many recent work share the similar augmentation~\cite{abs-2104-02057,GrillSATRBDPGAP20,coke,ZhongW0HDNL021}. 

All experiments on small data sets are implemented on a single V100, while $8$ GPUs are sufficient for ImageNet.

\subsection{Ablation Study}
To illustrate the behavior of the proposed method, we study the effect of parameters in SeCu on CIFAR-10 with the size constraint.

\subsubsection{Effect of MLP Head}\label{sec:mlp}
The MLP head shows better generalization than the FC layer in representation learning~\cite{ChenK0H20}, but its application in deep clustering has not been evaluated systematically. We compare different architectures of MLP head and summarize results in Table~\ref{ta:mlp}. Besides the standard MLP layer for projection, there can be another MLP layer attached to the projection MLP head, which is referred as the prediction head~\cite{GrillSATRBDPGAP20}.

\begin{table}[!ht]
\centering
\begin{tabular}{|l|l|l|l|l|l|}\hline
\#Proj&\#Pred&\#Para&ACC&NMI&ARI\\\hline
0&0&11.45M&85.5&76.2&73.1\\
1&0&11.30M&87.6&78.7&76.7\\
2&0&11.57M&88.1&79.4&77.6\\
3&0&11.83M&88.1&79.4&77.6\\
3&2&11.97M&88.2&79.6&77.8\\\hline
\end{tabular}
\caption{Comparison of MLP head on CIFAR-10. ``\#Proj'' and ``\#Pred'' indicate the number of layers in MLP for projection head and prediction head, respectively. ACC (\%), NMI (\%) and ARI (\%) are reported. Baseline ACC of SCAN~\cite{GansbekeVGPG20} is $81.8\%$.}\label{ta:mlp}
\end{table}

First, with the standard architecture denoted by 0 for projection and prediction, SeCu is already $3.7\%$ better than SCAN on ACC and it confirms that the one-stage deep clustering strategy can learn more consistent features than two-stage methods. If including a single layer for SeCu, the performance can be improved by $2.1\%$, which shows that adding a layer with the reduced dimension in head is helpful for deep clustering. Moreover, due to the dimension reduction from the projection layer, the size of cluster centers decreases and the total number of parameters in the model is even smaller than that of the original network. When increasing the number of layers and having a 2-layer MLP head, ACC can be further improved by $0.5\%$ and the model size is only slightly increased. After that, there is no significant gain by applying more complicated MLP head. It is because that the 2-layer MLP head is sufficient for small data sets and we will fix 2-layer MLP as the head for the rest experiments except ImageNet, where 3-layer projection and 2-layer prediction are applied as in \cite{coke}. 

\subsubsection{Effect of Negative Instances}
After the ablation experiments for the architecture, we study the effect of the proposed stable cluster discrimination loss in Eqn.~\ref{eq:pce}. Table~\ref{ta:pce} compares it to the standard cross entropy loss (CE) that keeps the gradient from negative instances.

\begin{table}[!ht]
\centering
\begin{tabular}{|l|c|c|c|c|c|}\hline
Loss&\#Max&\#Min&ACC&NMI&ARI\\\hline
CE&6,486&4,501&51.6&50.2&35.0\\
SeCu&5,190&4,556&88.1&79.4&77.6\\\hline
\end{tabular}
\caption{Comparison of loss functions on CIFAR-10. ``\#Max'' and ``\#Min'' indicate the maximal and minimal size of obtained clusters.}\label{ta:pce}
\end{table}

First, we can observe that the proposed SeCu loss outperforms the standard cross entropy loss by a large margin of $36.5\%$ on ACC, which confirms the benefit of stable training. Then, by investigating the distribution of learned clusters, we find that the minimal size of clusters obtained from different loss functions is almost the same due to the lower-bound constraint, while the maximal size varies. The largest cluster obtained by SeCu contains 5,190 instances, which is close to the ground-truth distribution of CIFAR-10. On the contrary, the dominating cluster found by CE loss has 6,486 instances that is significantly different from the uniform distribution in CIFAR-10. It is because that the negative instances in CE loss can perturb the learning of centers and result in sub-optimal cluster assignments.

\subsubsection{Effect of Updating Criterion for Cluster Centers}
The cluster centers in SeCu can be optimized by SGD or updated by a closed-form solution as in Eqn.~\ref{eq:w}. We compare these two strategies and summarize the performance in Table~\ref{ta:cf10opt}.

\begin{table}[!ht]
\centering
\begin{tabular}{|l|c|c|c|}\hline
Methods&ACC&NMI&ARI\\\hline
SeCu-CF &87.4$\pm$0.5&78.5$\pm$0.6&76.4$\pm$0.9\\
SeCu-SGD &88.1$\pm$0.3&79.3$\pm$0.4&77.5$\pm$0.4\\\hline
\end{tabular}
\caption{Comparison of optimization for cluster centers on CIFAR-10. Mean$\pm$std over 8 trails are reported.}\label{ta:cf10opt}
\end{table}

Let ``SeCu-CF'' and ``SeCu-SGD'' be the closed-form update and the SGD optimization, respectively. We can observe that SeCu-CF has the similar performance to SeCu-SGD, which shows the effectiveness of the closed-form updating. Moreover, SeCu-SGD has a smaller standard deviation than SeCu-CF due to the smoothness from the momentum in SGD. Compared with SeCu-CF, SeCu-SGD has an additional parameter of the learning rate for centers. Hence, SeCu-CF can be applied to evaluate the preliminary performance of SeCu with less tuning efforts.

\subsubsection{Effect of $\alpha$ in Entropy Constraint}\label{sec:alpha}

Besides the size constraint, the proposed entropy constraint is also effective for balancing cluster assignments. Table~\ref{ta:alpha} demonstrates the results with different $\alpha$. 

\begin{table}[!ht]
\centering
\begin{tabular}{|l|l|l|l|l|l|}\hline
$\alpha$&\#Max&\#Min&ACC&NMI&ARI\\\hline
20,000&5,056&4,913&85.3&76.3&73.1\\
8,000&5,115&4,658&87.9&79.0&77.1\\
6,000&5,120&4,579&88.0&79.4&77.4\\
4,000&5,376&4,324&84.9&75.4&72.2\\
100&28,575&0&19.9&38.7&17.0\\
0&50,000&0&10.0&0.0&0.0\\\hline
\end{tabular}
\caption{Comparison of $\alpha$ in Eqn.~\ref{eq:entropy} on CIFAR-10. }\label{ta:alpha}
\end{table}

First, a large $\alpha$ will regularize the distribution to be uniform and result in a sub-optimal performance. By decreasing the weight, the assignment becomes more flexible and a desired performance can be observed when $\alpha=6,000$. However, a small $\alpha$ will lead to an imbalanced distribution, which is inappropriate for the balanced data sets such as CIFAR and STL. $\alpha=0$ discards the constraint and incurs collapsing. Evidently, the entropy constraint can balance the size of clusters effectively

Since the entropy is defined on the whole data set and CIFAR-10 contains $50,000$ instances for training, $\alpha$ will be scaled as $6N/50$ for different data sets, where $N$ denotes the total number of instances in the training set. More ablation experiments can be found in the appendix.

\subsection{Comparison with State-of-the-Art}
After the ablation study, experiments are conducted on benchmark data sets to compare with state-of-the-art methods. The results averaged over 8 trials and the best performance among them are reported for our method. SeCu with size constraint and entropy constraint are denoted as ``SeCu-Size'' and ``SeCu-Entropy'', respectively. Table~\ref{ta:cluster} shows the performance of different methods, where two-stage methods with a pre-training phase are marked for ``Two-stage''.

\begin{table*}[!ht]
\centering
\begin{tabular}{|l|c|c|c|c|c|c|c|c|c|c|}\hline
\multirow{2}*{Methods}&\multirow{2}*{Two-stage}&\multicolumn{3}{c|}{CIFAR10}&\multicolumn{3}{c|}{CIFAR100-20}&\multicolumn{3}{c|}{STL10}\\\cline{3-11}
&&ACC&NMI&ARI&ACC&NMI&ARI&ACC&NMI&ARI\\\hline
Supervised&&93.8&86.2&87.0&80.0&68.0&63.2&80.6&65.9&63.1\\\hline
DeepCluster~\cite{CaronBJD18}&&37.4&-&-&18.9&-&-&33.4&-&-\\\hline
IIC~\cite{JiVH19}&&61.7&51.1&41.1&25.7&22.5&11.7&59.6&49.6&39.7\\\hline
PICA~\cite{HuangGZ20} (best)&&69.6&59.1&51.2&33.7&31.0&17.1&71.3&61.1&53.1\\\hline
Pretext~\cite{ChenK0H20}+k-means&\checkmark&65.9&59.8&50.9&39.5&40.2&23.9&65.8&60.4&50.6\\\hline
SCAN~\cite{GansbekeVGPG20} (mean) &\checkmark&81.8 & 71.2 & 66.5& 42.2 & 44.1 & 26.7& 75.5&65.4&59.0\\\hline
NNM~\cite{DangD0WH21}&\checkmark&84.3&74.8&70.9&47.7&48.4&31.6&80.8&69.4&65.0\\\hline
GCC~\cite{ZhongW0HDNL021}&\checkmark&85.6&76.4&72.8&47.2&47.2&30.5&78.8&68.4&63.1\\\hline
CoKe~\cite{coke} (mean) &&85.7&76.6&73.2&49.7&49.1&33.5&-&-&-\\\hline
SeCu-Size (mean) &&88.1&79.3&77.5&50.0&50.7&35.0&80.2&69.4&63.9\\
SeCu-Size (best)&&\textbf{88.5}&\textbf{79.9}&\textbf{78.2}&\textbf{51.6}&\textbf{51.6}&\textbf{36.0}&\textbf{81.4}&\textbf{70.7}&\textbf{65.7}\\\hline
SeCu-Entropy (mean)&&88.0&79.2&77.2&49.9&50.6&34.1&79.5 &68.7&63.0\\
SeCu-Entropy (best)&&88.2&79.7&77.7&51.2&51.4&34.9&80.5&69.9&64.4\\\hline
\multicolumn{11}{|l|}{\textit{with self-labeling:}}\\\hline
SCAN~\cite{GansbekeVGPG20}&\checkmark&88.3&79.7&77.2&50.7&48.6&33.3&80.9&69.8&64.6 \\
GCC~\cite{ZhongW0HDNL021}&\checkmark&90.1&-&-&52.3&-&-&83.3&-&- \\
SeCu &&\textbf{93.0}&\textbf{86.1}&\textbf{85.7}&\textbf{55.2}&\textbf{55.1}&\textbf{39.7}& \textbf{83.6} & \textbf{73.3}&\textbf{69.3}\\\hline
\end{tabular}
\caption{Comparison of clustering methods on benchmark data sets. ``Two-stage'' denotes an additional pre-training stage.}\label{ta:cluster}
\end{table*}

First, as a one-stage method, SeCu outperforms two-stage methods by a margin of about $3\%$ over all metrics on CIFAR-10 and CIFAR-100-20. It demonstrates that learning representations and clustering simultaneously is essential for deep clustering. The two-stage method NNM\cite{DangD0WH21} shows the comparable performance to SeCu on STL-10. It is because that STL-10 only constrains $5,000$ target instances for training and both of NNM and SeCu already achieve the accuracy of supervised learning. Second, compared with the one-stage method without sufficient representation learning, i.e., PICA, SeCu demonstrates a better performance by about $20\%$ on CIFAR-10 and CIFAR-100-20, and $10\%$ on STL-10, which shows the importance of optimizing representations for instances. Third, with the same size constraint, SeCu surpasses CoKe by $4.3\%$ for ARI on CIFAR-10, which confirms the effectiveness of the proposed learning objective. Finally, compared with size constraint in SeCu-Size, SeCu-Entropy demonstrates a competitive performance over all data sets. Since the entropy constraint is optimized by enumeration without introducing any auxiliary variable, it can be an substitute to the size constraint for cluster assignments.

After the clustering phase, self-labeling~\cite{GansbekeVGPG20} is an effective strategy to further improve the performance. We try it for SeCu-Size with the setting in \cite{GansbekeVGPG20} and report the result in Table~\ref{ta:cluster}. Interestingly, self-labeling is also effective for SeCu and helps approach the supervised accuracy on CIFAR-10. As a one-stage method, SeCu has the strong augmentation for representation learning, which may introduce additional noise for clustering. Self-labeling has the weak augmentation for fine-tuning and refines clustering. 

\begin{figure}[!ht]
\centering
\includegraphics[height = 1.5in]{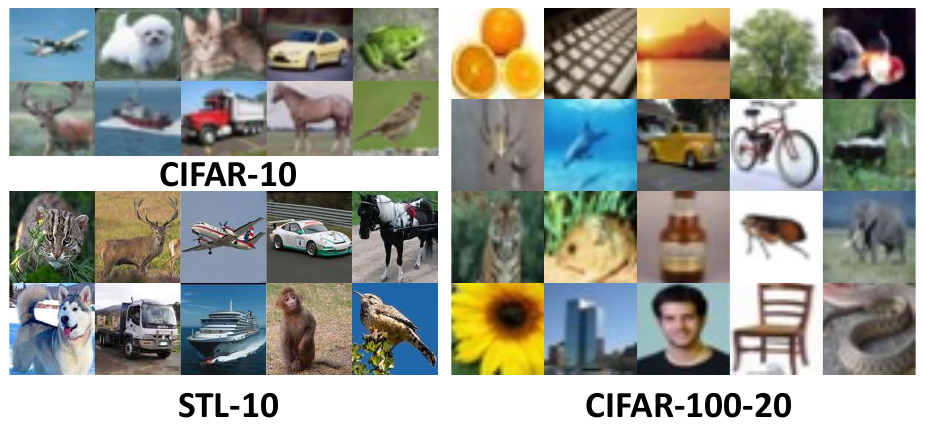}
\caption{Illustration of exemplar images.}\label{fig:exemplar}
\end{figure}

\paragraph{Exemplar Images}
Finally, we show exemplar images from clusters obtained by SeCu for different data sets. Fig.~\ref{fig:exemplar} illustrates the images that are close to cluster centers in CIFAR-10, CIFAR-100-20, and STL-10, respectively. Evidently, the proposed method can obtain the ground-truth class structure for CIFAR-10 and STL-10. Moreover, with a large intra-class variance, SeCu can still observe appropriate clusters for CIFAR-100-20.

Considering that labels of super-classes in CIFAR-100-20 contain vague concepts such as ``vehicles 1'' and ``vehicles 2'', we include results on the target 100 classes in Table~\ref{ta:cf100}, which may benefit future research. The experiment settings for CIFAR-100 are identical to that for CIFAR-100-20, except the learning rate of centers, which is reduced to $0.01$.

\begin{table}[!ht]
\centering
\begin{tabular}{|l|c|c|c|}\hline
Methods&ACC&NMI&ARI\\\hline
SeCu-Size (mean)&46.4&61.7&32.3 \\
SeCu-Size (best)&47.1&61.9&32.6 \\
SeCu-Entropy (mean) & 45.3&60.7&31.5\\
SeCu-Entropy (best) & 46.7&61.0&32.2\\
SeCu$^\dagger$&51.3&65.2&37.1\\\hline
\end{tabular}
\caption{Results of SeCu on CIFAR-100. $^\dagger$ denotes a method with self-labeling.}\label{ta:cf100}
\end{table}

\subsection{Comparison on ImageNet}
Now we evaluate SeCu on the challenging ImageNet data set in Table~\ref{ta:imagenet}, where the cost of many two-stage methods becomes intractable and only available baselines are included. First, both variants of SeCu can achieve $51\%$ ACC on ImageNet, which is better than SCAN with self-labeling by a clear margin of $11\%$. With the same $10$ clustering heads and size constraint, SeCu-Size is still $3.8\%$ better than CoKe on ACC. Compared with the objective in CoKe, SeCu is tailored for deep clustering and the comparison confirms the efficacy of our method. With self-labeling, SeCu shows state-of-the-art performance of $53.5\%$ ACC on ImageNet. It demonstrates that our method is applicable for large-scale data sets. 

\begin{table}[!ht]
\centering
\begin{tabular}{|l|c|c|c|}\hline
Methods&ACC&NMI&ARI\\\hline
SCAN$^\dagger$&39.9&72.0&27.5\\
CoKe$^\ddagger$&47.6&76.2&35.6\\
SeCu-Size&51.4&78.0&39.7\\
SeCu-Entropy&51.1&77.5&39.1\\
SeCu$^\dagger$&53.5&79.4&41.9\\\hline
\end{tabular}
\caption{Comparison on ImageNet. $^\dagger$ denotes a method with self-labeling. $^\ddagger$ is our reproduction with $10$ clustering heads.}\label{ta:imagenet}
\end{table}

\section{Conclusion}
In this work, a novel stable cluster discrimination task that considers the hardness of instances is proposed for one-stage deep clustering. To avoid collapsing in representations, a global entropy constraint with theoretical guarantee is investigated. Finally, the comparison with state-of-the-art methods confirms the effectiveness of our proposal. While recent work shows that leveraging nearest neighbors improves deep clustering~\cite{DangD0WH21, GansbekeVGPG20,ZhongW0HDNL021}, incorporating such information in SeCu can be our future work.

{\small
\bibliographystyle{ieee_fullname}
\bibliography{secu}
}
\appendix
\section{Theoretical Analysis}
\subsection{Proof of Proposition~1}
\begin{proof}
Suppose for contradiction that there are $K-b-1$ clusters without any positive instances. Then, $b+1$ clusters have positive instances. Since a positive instance cannot be shared by different clusters, the total number of instances is no less than $b+1$, which contradicts the batch size of $b$.
\end{proof}
\subsection{Proof of Proposition~2}
\begin{proof}
Assuming that each cluster has the same number of instances and $\mu_i = E[\x_i]$, we have
{\small
\begin{align*}
&\mathrm{Var}_{pos} = E_{\x_i}[\|\x_i - \mu_i\|_2^2] = 1-\|\mu_i\|_2^2 = 1-a^2\\
&\mathrm{Var}_{neg} = E_{\x_j}[\|\x_j - \frac{1}{K-1}\sum_j^{K-1}\mu_j\|_2^2] = 1-\|\frac{1}{K-1}\sum_j^{K-1}\mu_j\|_2^2
\end{align*}}
If assuming a uniform distribution of centers such that $E_\mu[\mu]=\mathbf{0}$, we have $\|\frac{1}{K-1}\sum_j^{K-1}\mu_j\|_2^2 = \frac{a^2}{K-1}$ and
$\mathrm{Var}_{neg} = 1-a^2/(K-1)$. Therefore 
$\mathrm{Var}_{neg} = (\frac{K-2}{(K-1)(1-a^2)} + \frac{1}{K-1}) \mathrm{Var}_{pos}$.
\end{proof}

\subsection{Proof of Theorem~1}
\begin{proof}
When fixing $\x_i$ and $\{y_i\}$, the optimization problem for centers can be written as
\begin{align*}
\min_{\{\w_j\}} &\sum_i \log(\exp(\x_i^\top \w_{y_i}/\lambda)+\sum\limits_{k:k\neq y_i}\exp(\x_i^\top\tilde{\w}_k/\lambda))\\
&- \x_i^\top \w_{y_i}/\lambda
\end{align*}
Since $\x_i$ and $\w_j$ have the unit length, the problem is equivalent to
\begin{align}\label{eq:objw}
\min_{\{\w_j\}} &\sum_i \log(\exp(-\|\x_i- \w_{y_i}\|_2^2/2\lambda)\nonumber\\
&+\sum\limits_{k:k\neq y_i}\exp(-\|\x_i - \tilde{\w}_k\|_2^2/2\lambda)) + \|\x_i -  \w_{y_i}\|_2^2/2\lambda
\end{align}
We can obtain the solution by letting the gradient of $\w$ be 0. Nevertheless, we will introduce an alternating method for better demonstration.

By introducing an auxiliary variable $q_i$ as the distribution over centers, the problem can be further written as
\begin{align}\label{eq:obj}
&\min_{\{\w_j\}} \sum_i \max_{q_i\in\Delta'}-q_{i,y_i}\|\x_i- \w_{y_i}\|_2^2/2\lambda\\
&+\sum\limits_{k:k\neq y_i}-q_{i,k}\|\x_i - \tilde{\w}_k\|_2^2/2\lambda + H(q_i) + \|\x_i -  \w_{y_i}\|_2^2/2\lambda\nonumber
\end{align}
where $H(q_i)=-\sum_j q_{i,j}\log(q_{i,j})$ measures the entropy of the distribution and $\Delta' = \{q_i|\sum_{j=1}^K q_{i,j}=1, \forall j, q_{i,j}\geq 0\}$. 
We note that $q_i$ has the closed-form solution according to the K.K.T. condition~\cite{boyd2004convex} as
\begin{eqnarray}\label{eq:q}
q_{i,j} = \frac{\exp(\x_i^\top \w_j/\lambda)}{\sum_k^K \exp(\x_i^\top \w_k/\lambda)}=p_{i,j}
\end{eqnarray}
Taking it back to the problem and letting the gradient for centers be 0, the optimal solution $\w^*$ should satisfy the property
\[\w_j =\frac{\sum_{i:y_i=j} (1-p_{i,j})\x_i}{\sum_{i:y_i=j} 1- p_{i,j}}\]
With the unit length constraint and K.K.T. condition~\cite{boyd2004convex}, it will be projected as
\begin{eqnarray}\label{eq:w}
\w_j = \Pi_{\|\w\|_2=1}(\frac{\sum_{i:y_i=j} (1-p_{i,j})\x_i}{\sum_{i:y_i=j} 1- p_{i,j}})
\end{eqnarray}

Now, we demonstrate the effect of the closed-form solution. Let $\LL(\w)$ denote the objective in Eqn.~\ref{eq:objw} and we have
\[\nabla\LL(\w) = \w - \frac{\sum_{i:y_i=j} (1-p_{i,j})\x_i}{\sum_{i:y_i=j} 1- p_{i,j}}\]
According to gradient descent (GD), centers can be updated as
\[\w^t = \Pi_{\|\w\|_2=1}(\w^{t-1} - \eta_w\nabla\LL(\w^{t-1}))\]
The target solution can be obtained by setting $\eta_w=1$. Therefore, the closed-form solution can be considered as the vanilla gradient descent with the constant learning rate of $1$, which suggests a constant learning rate for cluster centers.
\end{proof}

\section{SeCu with Upper-bound Size Constraint}
We introduce the upper-bound size constraint for the completeness, while the lower-bound constraint is sufficient in our experiments. With the additional upper-bound size constraint, the objective for SeCu becomes
\begin{eqnarray*}
&&\min_{\theta_f, \{\w_j\}, y_i\in\Delta} \sum_{i=1}^N\sum_{j=1}^K \ell_{\mathrm{SeCu}}(x_i,y_i)\nonumber\\
s.t. &&  \sum_i y_{i,j} \geq \gamma N/K,\quad j=1,\dots, K\nonumber\\
&& \sum_i y_{i,j} \leq \gamma' N/K,\quad j=1,\dots, K
\end{eqnarray*}

Compared with the variant containing the lower-bound constraint, the difference is from the updating for cluster assignments.

When fixing $\x_i$ and cluster centers $\{\w_j\}$, cluster assignments will be updated by solving an assignment problem as
\begin{eqnarray*}
&&\min_{y_i\in\Delta} \sum_{i=1}^N\sum_{j=1}^K -y_{i,j}\log(p_{i,j})\nonumber\\
s.t. && \sum_i y_{i,j} \geq \gamma N/K,\quad j=1,\dots, K\nonumber\\
&& \sum_i y_{i,j} \leq \gamma' N/K,\quad j=1,\dots, K
\end{eqnarray*}

We extend the dual-based method in \cite{coke} to update labels in an online manner. Let $\rho_j$ and $\rho'_j$ denote dual variables for the $j$-th lower-bound and upper-bound constraints, respectively. When a mini-batch of $b$ examples arrive at the $r$-th iteration of the $t$-th epoch, the cluster assignments for instances in the mini-batch can be obtained via a closed-form solution as
\begin{eqnarray*}
y_{i,j}^t = \left\{\begin{array}{ll}1\quad&j=\arg\min_j -\log(p_{i,j})-\rho_j^{r-1}+{\rho'}_j^{r-1}  \\ 0\quad&o.w. \end{array} \right.
\end{eqnarray*}
After that, the dual variables will be updated as
\begin{eqnarray*}
&&\rho_j^r = \max(0,\rho_j^{r-1} - \eta_\rho \frac{1}{b}\sum_{s=1}^b( y_{s,j}^t-\gamma/K))\nonumber\\
&&{\rho'}_j^r = \max(0,{\rho'}_j^{r-1} + \eta_\rho \frac{1}{b}\sum_{s=1}^b( y_{s,j}^t-\gamma'/K))
\end{eqnarray*}
where $\eta_\rho$ is the learning rate of dual variables. Without dual variables, the online assignment is degenerated to a greedy strategy. Intuitively, dual variables keep the information of past assignments and help adjust the current assignment adaptively to satisfy the global constraint. 

\section{Experiments}

\subsection{More Implementation Details} 

\paragraph{Experiments on STL-10} Unlike CIFAR, STL-10 has an additional noisy data set for unsupervised learning. Therefore, the temperature for optimizing cluster centers is increased to $1$ to learn from the noisy data, while that for representation learning remains the same. Moreover, the weight of the entropy constraint is increased to $26,460$ for the first stage training. It is reduced to $600$ in the second stage according to the proposed scaling rule, when only clean training set is used. Finally, for the second stage, only the target clustering head is kept for training and the learning rate for the encoder network is reduced from $0.2$ to $0.002$ for fine-tuning. Other parameters except the number of epochs are the same as the first stage. The number of training epochs for the first and the second stage is $800$ and $100$, respectively.

\paragraph{Experiments on ImageNet} We reuse the settings in \cite{coke} for our method while searching the optimal parameters may further improve the performance. Concretely, the model is optimized by LARS~\cite{abs-1708-03888} with $1,000$ epochs, where the weight decay is $10^{-6}$, the momentum is $0.9$ and the batch size is $1,024$. The learning rate for the encoder network is $1.6$ with the cosine decay and 10-epoch warm-up. The ratio in the lower-bound size constraint and the learning rate of dual variables are set to be $0.4$ and $20$, respectively. The learning rate for cluster centers is fixed as $4.2$ in SeCu-Size. For the entropy constraint, $\alpha$ is $90,000$ and the learning rate for cluster centers is gradually increased from $0$ to $5.6$ according to the negative cosine function in $[0,\pi]$. 

\paragraph{Self-labeling}
Self-labeling is to fine-tune the model by optimizing the strong augmentation with pseudo labels from the weak augmentation, where the strong augmentation here is still much milder than that for pre-training. For a fair comparison, the same weak and strong augmentations as in \cite{GansbekeVGPG20} are applied for SeCu. Besides, SGD is adopted for self-labeling with $100$ epochs on small data sets and $11$ epochs on ImageNet. The batch size is $1,024$ and momentum is $0.9$, which are the same as \cite{GansbekeVGPG20}. Before selecting the confident instances by the prediction from the weak augmentation with a threshold of 0.9, we have a warm-up period with $10$ epochs, where all instances are trained with the fixed pseudo label from the assignment of pre-trained SeCu. 

\subsection{Ablation Study}

\subsubsection{Effect of Output Dimension}

Given the 2-layer MLP head, we investigate the effect of the output dimension by varying the value in $\{64,128,256,512\}$. Table~\ref{ta:dim} shows the performance of different dimensions.

\begin{table}[!ht]
\centering
\begin{tabular}{|l|c|c|c|}\hline
Output Dim&ACC&NMI&ARI\\\hline
64&88.0&79.3&77.4\\
128&88.1&79.4&77.6\\
256&88.2&79.3&77.5\\
512&87.8&79.0&77.2\\\hline
\end{tabular}
\caption{Comparison of the output dimension by the MLP head. }\label{ta:dim}
\end{table}

We can observe that the performance is quite stable with a small number of features. It is because that a low-dimensional space can capture the similarity with the standard distance metric better than a high-dimensional space. We will keep the output dimension as $128$, which is the same as the existing work~\cite{ZhongW0HDNL021}.

\subsubsection{Effect of $\gamma$ in Size Constraint}
Now we study the effect of the size constraint in SeCu and Table~\ref{ta:gamma} shows the performance with different lower-bound ratio $\gamma$. 

\begin{table}[!ht]
\centering
\begin{tabular}{|l|l|l|l|l|l|}\hline
$\gamma$&\#Max&\#Min&ACC&NMI&ARI\\\hline
1&5,015&4,973&85.4&76.2&73.0\\
0.9&5,190&4,556&88.1&79.4&77.6\\
0.8&5,323&4,422&87.6&78.7&76.6\\
0.7&5,721&3,789&86.6&77.7&75.1\\\hline
\end{tabular}
\caption{Comparison of $\gamma$ for SeCu-Size on CIFAR-10. }\label{ta:gamma}
\end{table}

The same phenomenon as the entropy constraint can be observed. When $\gamma=1$, it implies a well-balanced clustering that each cluster contains the similar number of instances. Although the constraint can be satisfied with the dual-based updating, the performance degenerates due to the strong regularization for a balanced cluster assignment. By reducing $\gamma$ to $0.9$, the assignment is more flexible, which leads to a better pseudo label for representation learning. The assignment becomes more imbalanced if further decreasing $\gamma$. Therefore, we fix $\gamma=0.9$ for small data sets.

\subsubsection{Effect of Batch Size}

SeCu inherits the property of supervised discrimination that is insensitive to the batch size. We vary it in $\{32,64,128,256\}$ and show the ACC of SeCu-Size on CIFAR-10 in Table~\ref{ta:bsize}, which confirms its efficacy.

\begin{table}[!ht]
\centering
\begin{tabular}{|l|c|c|c|c|}\hline
Batch Size&32&64&128&256\\\hline
ACC(\%)&87.9&88.3&88.1&87.9\\\hline
\end{tabular}
\caption{Comparison of batch size for SeCu-Size on CIFAR-10. }\label{ta:bsize}
\end{table}

\end{document}